\title{Tractable Combinations of Global Constraints}
\author{
David~A. Cohen\inst{1}
\and
Peter~G. Jeavons\inst{2}
\and \\
Evgenij Thorstensen\thanks{Supported by EPSRC grant EP/G055114/1}\inst{2}
\and
Stanislav \v{Z}ivn\'y\thanks{Supported by a Senior Research Fellowship from
Warwick's DIMAP.}\inst{3}
}
\institute{Department of Computer Science, Royal Holloway, University of London, UK\\\email{d.cohen@rhul.ac.uk}
\and
Department of Computer Science, University of Oxford, UK\\ 
\email{firstname.lastname@cs.ox.ac.uk}
\and
Department of Computer Science, University of Warwick, UK\\
\email{s.zivny@warwick.ac.uk}}
\newcommand{\tw}{\ensuremath{\mathsf{tw}}}
\newcommand{\CSP}{\ensuremath{\mathsf{CSP}}}
\newcommand{\Core}{\ensuremath{\mathsf{Core}}}
\newcommand{\eqt}{\ensuremath{\mathsf{equiv}}}
\newcommand{\vars}{\ensuremath{\mathsf{vars}}}
\newcommand{\hyp}{\ensuremath{\mathsf{hyp}}}
\newcommand{\TDD}{\ensuremath{\mathsf{twDD}}}
\newcommand{\join}{\ensuremath{\mathsf{join}}}
\newcommand{\iv}{\ensuremath{\mathsf{iv}}}
\newcommand{\ext}{\ensuremath{\mathsf{ext}}}
\newcommand{\languageword}{catalogue}
\newcommand{\Languageword}{Catalogue}
\newcommand{\languagesymbol}{\ensuremath{\mathcal C}}
\DeclarePairedDelimiter{\tup}{\langle}{\rangle}
\newcommand{\omitted}[1]{}
\begin{document}

\maketitle

\begin{abstract}
  We study the complexity of constraint satisfaction problems 
  involving global constraints, i.e.,~special-purpose constraints provided
  by a solver and represented implicitly by a parametrised
  algorithm. Such constraints are widely used; indeed,
  they are one of the key reasons for the success of constraint
  programming in solving real-world problems.

  Previous work has focused on the development of efficient
  propagators for individual constraints. In this paper, we identify a
  new tractable class of constraint problems involving global
  constraints of unbounded arity.  To do so, we combine structural
  restrictions with the observation that some important types of
  global constraint do not distinguish between large classes of
  equivalent solutions.
\end{abstract}

\section{Introduction}

Constraint programming (CP) is widely used to solve a variety of
practical problems such as planning and scheduling
\cite{vanHoeve2006:global-const,Wallace96practicalapplications}, and
industrial configuration \cite{pup-cpaior2011,bin-repack-datacenters}.
The theoretical properties of constraint problems, in particular the
computational complexity of different types of problem, have been
extensively studied and quite a lot is known about what restrictions
on the general \emph{constraint satisfaction problem} are sufficient
to make it tractable
\cite{struct-decomp-stacs-et,bulatov05:classifying-constraints,Cohen08:unifiedstructural,Gottlob00:acomparison,grohe07-hom-csp-complexity,stoc-Marx10-submod-width}.

However, much of this theoretical work has focused on problems where
each constraint is represented \emph{explicitly}, by a table of
allowed assignments.

In practice, however, a lot of the success of CP is due to 
the use of special-purpose constraint types for which the software tools 
provide dedicated algorithms~\cite{Rossi06:handbook,Gent06:minion,wallace97-eclipse}. 
Such constraints are known as \emph{global constraints} 
and are usually represented \emph{implicitly} by an algorithm in the solver. 
This algorithm may take as a parameter a
\emph{description} that specifies exactly which kinds of assignments a
particular instance of this constraint should allow.

Theoretical work on global constraints has to a large extent focused
on developing efficient algorithms to achieve various kinds of 
local \emph{consistency} for individual constraints.
This is generally done by 
pruning from the domains of
variables those values that cannot lead to a satisfying assignment
\cite{Bess07:reasoning-global-const,Samer11:constraints-tractable}.
Another strand of research has explored when it is possible to 
replace global constraints by collections of explicitly
represented constraints \cite{bess-nvalue}. These techniques allow
faster implementations of algorithms for \emph{individual constraints}, 
but do not shed much light on the complexity of 
problems with multiple \emph{overlapping} global constraints, 
which is something that practical problems frequently require.

As an example, consider the following family of constraint problems involving clauses and
cardinality constraints of unbounded arity.

\begin{example}
  \label{example:Running}
  Consider a family of constraint problems on a set of Boolean variables
  $\{x_1,x_2,\ldots,x_{3n}\}$ (where $n = 2,3,4,\ldots$), with the following five constraints:
  \begin{itemize}
  \item $C_1$ is the binary clause $x_1 \vee x_{2n+1}$;
  \item $C_2$ is a cardinality constraint on $\{x_1,x_2,\ldots,x_n\}$ 
  specifying that exactly one of these variables takes the value 1;
  \item $C_3$ is a cardinality constraint on $\{x_{2n+1},x_{2n+2},\ldots,x_{3n}\}$
   specifying that exactly one of these variables takes the value 1;
  \item $C_4$ is a cardinality constraint on $\{x_2,x_3,\ldots,x_{3n}\}-\{x_{2n+1}\}$
   specifying that exactly $n+1$ of these variables takes the value 1;
  \item $C_5$ is the clause $\neg x_{n+1} \vee \neg x_{n+2} \vee \cdots \vee \neg x_{2n}$.
  \end{itemize}

This problem is illustrated in Figure~\ref{fig:RunningExampleG}.

\begin{figure}[ht]
  \centering
  \includegraphics[scale=0.35]{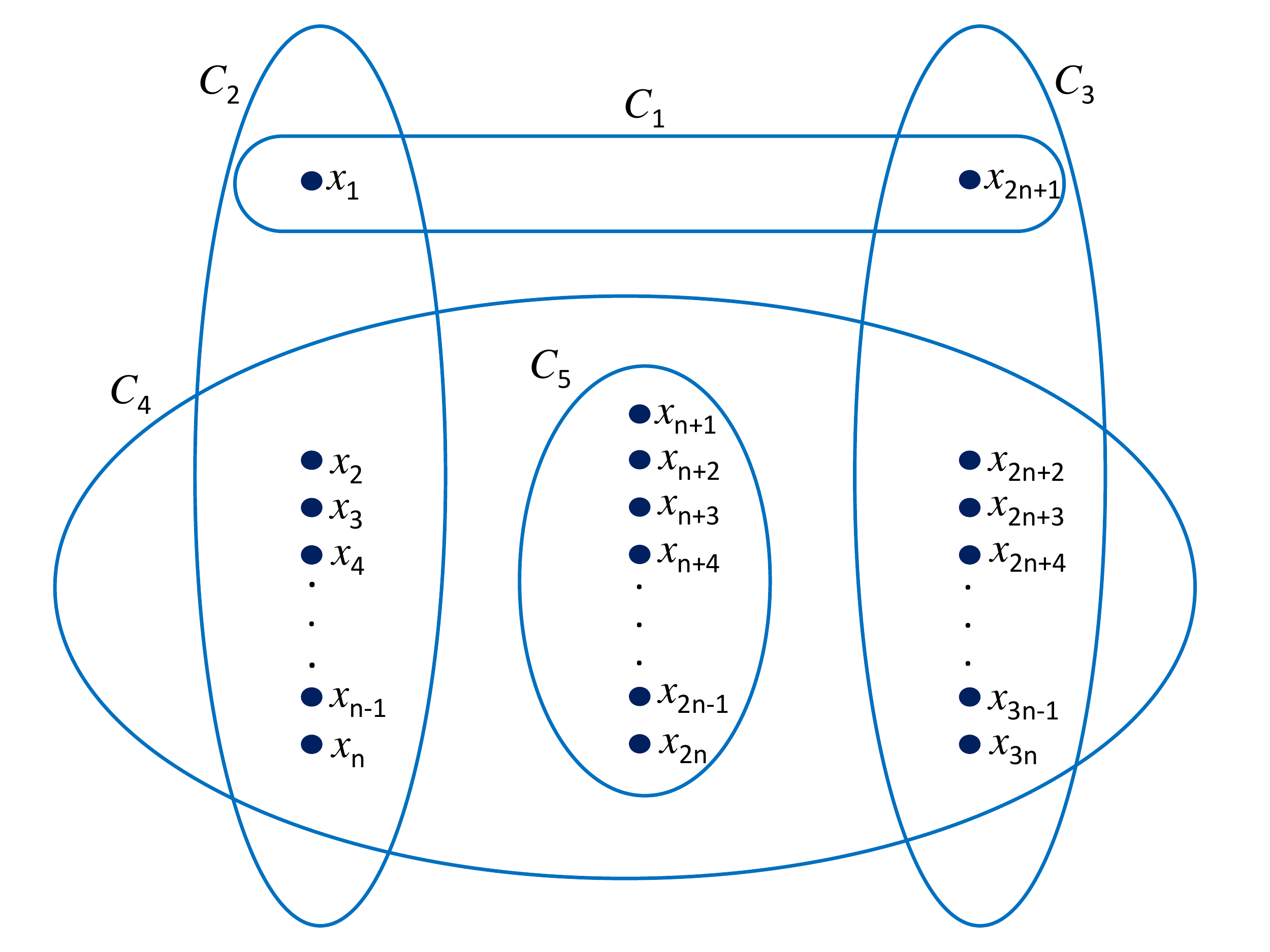}
  \caption{The structure of the constraint problems in Example~\ref{example:Running}}
  \label{fig:RunningExampleG}
\end{figure}

This family of problems is not included in any previously known tractable class,
but will be shown to be tractable using the results of this paper.
\end{example}


As discussed in~\cite{ChenGrohe10-csp-succ-repr}, 
when the constraints in a family of problems have unbounded arity,
the way that the constraints are {\em represented} can significantly affect the
complexity.
Previous work
in this area has assumed that the global constraints have specific representations, 
such as propagators~\cite{Green08:cp-structural},
negative constraints~\cite{gdnf-cohen-repr},
or GDNF/decision diagrams~\cite{ChenGrohe10-csp-succ-repr}, and exploited properties
particular to that representation.  In contrast, here we investigate
the conditions that yield efficiently solvable classes of constraint
problems with global constraints, without requiring any specific
representation. Many global constraints have succinct representations,
so even problems with very simple structures are known to be hard in some cases
\cite{Kutz08:sim-match,Samer11:constraints-tractable}. We will therefore
need to impose some restrictions on the properties of the individual 
global constraints, as well as on the problem structure.

To obtain our results, 
we define a notion of equivalence on assignments and 
a new width measure that identifies variables 
that are constrained in exactly the same way.
We then show that we can replace variables that are equated
under our width measure with a single new variable whose domain
represents the possible equivalence classes of assignments.
Both of these simplification steps, merging variables and equating assignments,
can be seen as techniques for eliminating symmetries in the original problem formulation. 
We describe some sufficient conditions under which these techniques 
provide a polynomial-time reduction to a known tractable case,
and hence identify new tractable classes of constraint problems 
involving global constraints.

\section{Global Constraints and Constraint Problems}

In order to be more precise about the way in which global constraints
are represented, we will extend the standard definition of a
constraint problem.

\begin{definition}[Variables and assignments]
  Let $V$ be a set of variables, each with an associated set of domain
  elements. We denote the set of domain elements (the domain) of a
  variable $v$ by $D(v)$.
  We extend this notation to arbitrary subsets of variables, $W$, 
  by setting $D(W) = \displaystyle\bigcup_{v \in W} D(v)$.

  An {\em assignment} of a set of variables $V$ is a function 
  $\theta : V \rightarrow D(V)$
  that maps every $v \in V$ to an element $\theta(v) \in D(v)$. 
  We denote the restriction of $\theta$ to a set of variables $W \subseteq V$ 
  by $\theta|_W$. We also allow the special assignment
  $\bot$ of the empty set of variables. In particular, for every
  assignment $\theta$, we have $\theta|_\emptyset = \bot$.
\end{definition}

Global constraints have traditionally been defined, somewhat vaguely,
as constraints without a fixed arity, possibly also with a compact
representation of the constraint relation. For example, in
\cite{vanHoeve2006:global-const} a global constraint is defined as ``a
constraint that captures a relation between a non-fixed number of
variables''.

Below, we offer a precise definition similar to the one in
\cite{Bess07:reasoning-global-const}, where the authors define global
constraints for a domain $D$ over a list of variables $\sigma$ as
being given intensionally by a function $D^{|\sigma|} \rightarrow \{0,
1\}$ computable in polynomial time. Our definition differs from this
one in that we separate the general {\em algorithm} of a global
constraint (which we call its {\em type}) from the specific
description.  This separation allows us a better way of measuring the
size of a global constraint, which in turn helps us to establish new
complexity results.

\begin{definition}[Global constraints]
  A \emph{global constraint type} is a parametrised polynomial-time
  algorithm that determines the acceptability of an assignment of a given
  set of variables.

  Each global constraint type, $e$, has an associated set of 
  \emph{descriptions}, $\Delta(e)$. Each description $\delta \in \Delta(e)$
  specifies appropriate parameter values for the algorithm $e$. 
  In particular, each $\delta \in \Delta(e)$ specifies a set of
  variables, denoted by $\vars(\delta)$.

  A \emph{global constraint} $e[\delta]$, where $\delta \in
  \Delta(e)$, is a function that maps assignments of $\vars(\delta)$
  to the set $\{0,1\}$.  Each assignment that is allowed by
  $e[\delta]$ is mapped to 1, and each disallowed assignment is mapped
  to 0.  The \emph{extension} or \emph{constraint relation} of
  $e[\delta]$ is the set of assignments, $\theta$, of $\vars(\delta)$
  such that $e[\delta](\theta) = 1$. We also say that such assignments
  \emph{satisfy} the constraint, while all other assignments
  \emph{falsify} it.
\end{definition}

When we are only interested in describing the set of assignments that
satisfy a constraint, and not in the complexity of determining
membership in this set, we will sometimes abuse notation by writing
$\theta \in e[\delta]$ to mean $e[\delta](\theta) = 1$.

As can be seen from the definition above, a global constraint is not
usually explicitly represented by listing all the assignments that
satisfy it. Instead, it is represented by some description $\delta$
and some algorithm $e$ that allows us to check whether the constraint
relation of $e[\delta]$ includes a given assignment. To stay within
the complexity class \NP, this algorithm is required to run in
polynomial time. As the algorithms for many common global constraints
are built into modern constraint solvers, we measure the {\em size} of
a global constraint's representation by the size of its description.

\begin{example}[EGC]
  \label{example:egc}
  A very general global constraint type is the \emph{extended global
    cardinality} constraint type~\cite{quimper04-gcc-npc,Samer11:constraints-tractable}.
  This form of global constraint is defined by specifying for every
  domain element $a$ a finite set of natural numbers $K(a)$, called
  the cardinality set of $a$. The constraint requires that the number
  of variables which are assigned the value $a$ is in the set $K(a)$,
  for each possible domain element $a$.

  Using our notation, the description $\delta$ of an EGC global
  constraint specifies a function $K_\delta : D(\vars(\delta))
  \rightarrow \mathcal P(\mathbb N)$ that maps each domain element to
  a set of natural numbers.  The algorithm for the EGC constraint then
  maps an assignment $\theta$ to $1$ if and only if, for every domain
  element $a \in D(\vars(\delta))$, we have that $|\{v \in
  \vars(\delta) \mid \theta(v) = a\}| \in K_\delta(a)$.

\end{example}

The cardinality constraint $C_2$ from Example~\ref{example:Running}
can be expressed as an EGC global constraint with description $\delta$
such that $K_\delta(1) = \{1\}$, and $K_\delta(0) = \{n-1\}$.

\begin{example}[Clauses]
  \label{example:clauses}
  We can view the disjunctive clauses used to define propositional satisfiability
  problems as a global constraint type in the following way.
  
  The description $\delta$ of a clause is simply a list of the literals that it contains,
  and $\vars(\delta)$ is the corresponding set of variables.
  The algorithm for the clause then maps any Boolean assignment 
  $\theta$ of $\vars(\delta)$ that
  satisfies the disjunction of the literals specified by $\delta$ to 1, 
  and all other assignments to 0.
  
  Note that a clause forbids precisely one assignment to
  $\vars(\delta)$ (the one that falsifies all of the literals in the
  clause).  Hence the extension of a clause contains
  $2^{|\vars(\delta)|}-1$ assignments, so the size of the constraint
 \emph{relation} grows exponentially with the number of variables, but the
  size of the constraint \emph{description} grows only linearly.
 \end{example}

\begin{example}[Table and negative constraints]
  \label{example:table-const}
  A rather degenerate example of a a global constraint type is 
  the \emph{table} constraint.
  
  In this case the description $\delta$ is simply a list of assignments 
  of some fixed set of variables, $\vars(\delta)$. The algorithm for
  a table constraint then decides, for any
  assignment of $\vars(\delta)$, whether it is included in $\delta$. 
  This can be done in a time which is linear in the size of $\delta$ 
  and so meets the polynomial time requirement. 

  {\em Negative} constraints are complementary to table constraints,
  in that they are described by listing {\em forbidden}
  assignments. The algorithm for a negative constraint $e[\delta]$
  decides, for any assignment of $\vars(\delta)$, whether it
  is {\em not} included in $\delta$. Observe that the clauses
  described in Example~\ref{example:clauses} are a special case of the
  negative constraint type, as they have exactly one forbidden
  assignment.
  
  We observe that any global constraint can be rewritten as a table or
  negative constraint. However, this rewriting will, in general, incur
  an exponential increase in the size of the description.
\end{example}

\begin{definition}[CSP instance]
  An instance of the constraint satisfaction problem (CSP) 
  is a pair $\tup{V, C}$ where $V$ is a finite set of
  \emph{variables}, and $C$ is a set of \emph{global constraints} 
  such that for every $e[\delta] \in C$, $\vars(\delta) \subseteq V$. 
  In a CSP instance, we call $\vars(\delta)$ the \emph{scope} 
  of the constraint $e[\delta]$.

  A \emph{solution} to a CSP instance $\tup{V, C}$ is an assignment $\theta$ of $V$
  which satisfies every global constraint, 
  i.e., for every $e[\delta] \in C$ we have $\theta|_{\vars(\delta)} \in e[\delta]$.
\end{definition}

The general constraint satisfaction problem is clearly NP-complete, so
in the remainder of the paper we shall look for more restricted
versions of the problem that are {\em tractable}, that is, solvable in
polynomial time.

\section{Restricted Classes of Constraint Problems}


First, we are going to consider restrictions on the way that the
constraints in a given instance interact with each other, or, in other
words, the way that the constraint scopes overlap; such restrictions
are known as {\em structural}
restrictions~\cite{Cohen08:unifiedstructural,Gottlob00:acomparison,grohe07-hom-csp-complexity}.

\begin{definition}[Hypergraph]
  A hypergraph $\tup{V, H}$ is a set of vertices $V$ together with a
  set of hyperedges $H \subseteq \mathcal P(V)$.

  Given a CSP instance $P = \tup{V, C}$, the hypergraph of $P$,
  denoted $\hyp(P)$, has vertex set $V$ together with a hyperedge
  $\vars(\delta)$ for every $e[\delta] \in C$.
\end{definition}

One special class of hypergraphs that has received a great deal of
attention is the class of {\em acyclic}
hypergraphs~\cite{Beeri83-acyclic-databases}.  This notion is a
generalisation of the idea of tree-structure in a graph, and has
been very important in the analysis of relational databases.  A
hypergraph is said to be acyclic if repeatedly removing all hyperedges
contained in other hyperedges, and all vertices contained in only a
single hyperedge, eventually deletes all vertices
\cite{Beeri83-acyclic-databases}.

Solving a CSP instance $P$ whose constraints are represented
extensionally (i.e., as table constraints) is known to be tractable if
the hypergraph of $P$, $\hyp(P)$, is
acyclic~\cite{GyssensJeavons94-database-tech}.  Indeed, this has
formed the basis for more general notions of ``bounded
cyclicity''~\cite{GyssensJeavons94-database-tech} or ``bounded
hypertree width''~\cite{Gottlob02:jcss-hypertree}, which have also
been shown to imply tractability for problems with explicitly
represented constraint relations.  However, this is no longer true if
the constraints are global, not even when we have a fixed, finite
domain, as the following examples show.

\begin{example}
  \label{example:singleEGC}
  Any hypergraph containing only a single edge is clearly acyclic (and
  therefore has hypertree width one~\cite{Gottlob02:jcss-hypertree}),
  but the class of CSP instances consisting of a single EGC constraint
  over an unbounded domain is \NP-complete~\cite{quimper04-gcc-npc}.
\end{example}

\begin{example}
  \label{example:3col}
  The \NP-complete problem of 3-colourability
  \cite{Garey79:intractability} is to decide, given a graph $\tup{V,E}$, whether the vertices $V$ can be coloured with three colours
  such that no two adjacent vertices have the same colour.

  We may reduce this problem to a CSP with EGC constraints
  (cf.~Example~\ref{example:egc}) as follows: Let $V$ be the set of
  variables for our CSP instance, each with domain $\{r,g,b\}$. For
  every edge $\tup{v, w} \in E$, we post an EGC constraint with scope
  $\{v, w\}$, parametrised by the function $K$ such that $K(r) = K(g)
  = K(b) = \{0,1\}$. Finally, we make the hypergraph of this CSP
  instance acyclic by adding an EGC constraint with scope $V$
  parametrised by the function $K'$ such that $K'(r) = K'(g) = K'(b) =
  \{0,\ldots,|V|\}$. This reduction clearly takes polynomial time,
  and the hypergraph of the resulting instance is acyclic.


\end{example}

These examples indicate that when dealing with implicitly represented
constraints we cannot hope for tractability using structural
restrictions alone.  We are therefore led to consider {\em hybrid}
restrictions, which restrict both the nature of the constraints and the
structure at the same time. 

\begin{definition}[Constraint \languageword]
  A \emph{constraint \languageword} is a set of global constraints. A
  CSP instance $\tup{V, C}$ is said to be over a constraint
  \languageword\ $\languagesymbol$ if for every $e[\delta] \in C$ we
  have $e[\delta] \in \languagesymbol$.
\end{definition}

Previous work on the complexity of constraint problems has restricted
the {\em extensions} of the constraints to a specified set of {\em
  relations}, known as a constraint {\em
  language}~\cite{bulatov05:classifying-constraints}.  This is an
appropriate form of restriction when all constraints are given
explicitly, as table constraints.  However, here we work with global
constraints where the relations are often implicit, and this can
significantly alter the complexity of the corresponding problem
classes, as we will illustrate below.  Hence we allow a more general
form of restriction on the constraints by specifying a constraint
\languageword\ containing all allowed constraints.


\begin{definition}[Restricted CSP class]
\label{def:CSPrestricted}
Let $\languagesymbol$ be a constraint \languageword, and let $\mathcal H$ be a class of
hypergraphs.  We define $\CSP(\mathcal H,\languagesymbol)$ to be the class of
CSP instances over $\languagesymbol$ whose hypergraphs are in $\mathcal H$.

\end{definition}

Using Definition~\ref{def:CSPrestricted}, we will restate an earlier
structural tractability result, which will form the basis for our
results in Section~\ref{sect:main-results}.  

\begin{definition}[Treewidth]
  A \emph{tree decomposition} of a hypergraph $\tup{V, H}$ is a pair
  $\langle T, \lambda \rangle$ where $T$ is a tree and $\lambda$ is a
  labelling function from nodes of $T$ to subsets of $V$, such that
  \begin{enumerate}
  \item for every $v \in V$, there exists a node $t$ of $T$ such that
    $v \in \lambda(t)$,
  \item for every hyperedge $h \in E$, there exists a node $t$ of $T$
    such that $h \subseteq \lambda(t)$, and
  \item for every $v \in V$, the set of nodes $\{ t \mid v \in
    \lambda(t) \}$ induces a connected subtree of $T$.
  \end{enumerate}

  The \emph{width} of a tree decomposition is $\max(\{ |\lambda(t)|-1
  \mid t \mbox{ node of } T\})$. The \emph{treewidth} $\tw(G)$ of a
  hypergraph $G$ is the minimum width over all its tree
  decompositions.

  Let $\mathcal H$ be a class of hypergraphs, and define $\tw(\mathcal
  H)$ to be the maximum treewidth over the hypergraphs in $\mathcal
  H$. If $\tw(\mathcal H)$ is unbounded we write $\tw(\mathcal H) =
  \infty$; otherwise $\tw(\mathcal H) < \infty$.
\end{definition}





We can now restate using the language of global constraints the
following result, from Dalmau et
al.~\cite{Dalmau02:csp-tractability-cores}, which builds on several
earlier results~\cite{DechterPearl89:treeclustering,Freuder90:ktree}.

\begin{theorem}[\cite{Dalmau02:csp-tractability-cores}]
  \label{thm:dalmau}
  Let $\languagesymbol$ be a constraint \languageword\ and $\mathcal H$ a 
  class of hypergraphs. $\CSP(\mathcal H, \languagesymbol)$ is
  tractable if $\tw(\mathcal H) < \infty$.
\end{theorem}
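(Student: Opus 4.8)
The plan is to adapt the classical dynamic-programming argument for bounded-treewidth CSPs, taking care that it survives the passage from explicitly listed relations to implicitly represented global constraints. Write $k = \tw(\mathcal H)$, which is a fixed constant by hypothesis. Given an instance $P = \tup{V,C} \in \CSP(\mathcal H, \languagesymbol)$, I would first compute a tree decomposition $\tup{T, \lambda}$ of $\hyp(P)$ of width at most $k$; since $k$ is fixed this can be done in polynomial (indeed linear) time by Bodlaender's algorithm, and the hypothesis $\tw(\mathcal H) < \infty$ is exactly what guarantees such a decomposition exists.

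Next, for each node $t$ of $T$, I would materialise the relation $R_t$ consisting of all assignments $\theta$ of $\lambda(t)$ such that $\theta|_{\vars(\delta)} \in e[\delta]$ for every constraint $e[\delta] \in C$ with $\vars(\delta) \subseteq \lambda(t)$. There are at most $|D(V)|^{|\lambda(t)|} \le |D(V)|^{k+1}$ candidate assignments of $\lambda(t)$; since $k$ is constant this is polynomial in the size of $P$, and each candidate can be tested against each relevant constraint in polynomial time, because a global constraint type is by definition a polynomial-time algorithm. Hence every $R_t$ can be computed in polynomial time. By property~2 of a tree decomposition, the scope of every constraint of $P$ is contained in $\lambda(t)$ for some $t$, so every constraint of $C$ is enforced in at least one $R_t$.

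Then I would run the standard bottom-up semijoin (Yannakakis-style) procedure on the tree of bags: root $T$ arbitrarily and process nodes from the leaves upward, repeatedly shrinking $R_t$ to the set of those $\theta \in R_t$ that, for each child $s$ of $t$, agree with some $\theta' \in R_s$ on $\lambda(t) \cap \lambda(s)$. Each such step is a semijoin of polynomially many polynomially sized rows and runs in polynomial time, and there are linearly many steps. The instance $P$ has a solution if and only if the relation at the root is non-empty once this process stabilises; if an explicit solution is wanted it is recovered by one top-down pass. Correctness is the usual argument: property~3 (connectedness of the bags containing each variable) ensures that locally consistent bag assignments agreeing on shared variables glue together into a globally consistent assignment, while property~2 ensures no constraint is missed.

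The only step that needs real care — and hence the main obstacle — is the claim that the brute-force construction of each $R_t$ is polynomial despite domains and constraint arities both being unbounded. This is precisely where the two ingredients pull their weight: bounded treewidth caps the bag size at the constant $k+1$, so $|D(V)|^{k+1}$ stays polynomial in $|D(V)|$ and hence in $|P|$, and the polynomial-time requirement built into the notion of a global constraint type prevents any implicit representation from blowing up the membership tests. Once each $R_t$ is in hand, the remaining computation is the classical acyclic-instance dynamic program, which is entirely indifferent to how the constraints were originally represented.
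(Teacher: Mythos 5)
Your proof is correct, but note that the paper does not prove this statement at all: it is imported verbatim from Dalmau, Kolaitis and Vardi, whose argument goes via finite-variable logics and existential pebble games (establishing that $k$-consistency, equivalently $k$-Datalog, solves instances whose structure has treewidth below $k$). What you give instead is the classical tree-clustering dynamic program in the style of Dechter--Pearl and Freuder (the earlier results the theorem builds on), adapted to the global-constraint setting, and it goes through. Two remarks on the adaptation. First, your ``main obstacle'' partly dissolves on inspection: because $\hyp(P)$ has one hyperedge per constraint scope and condition~2 of a tree decomposition forces every hyperedge into some bag, $\tw(\hyp(P)) \le k$ already implies that every constraint has arity at most $k+1$; so arities are not in fact unbounded within the class, and the genuinely load-bearing points are exactly the two you identify --- the bag size bound makes the $|D(V)|^{k+1}$ enumeration polynomial, and the polynomial-time membership test built into the definition of a global constraint type makes each candidate assignment checkable independently of how the constraint is represented (this representation-independence is precisely why the theorem survives the move away from table constraints). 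Second, a small technical point: Bodlaender's algorithm is stated for graphs, so you should run it on the primal graph of $\hyp(P)$, using the standard fact that hypergraph treewidth (with the hyperedge-covering condition) coincides with primal-graph treewidth; alternatively any polynomial-time approximation with width bounded by a function of $k$ suffices for tractability. Compared with the pebble-game/Datalog route, your argument is more elementary and yields the solution directly, while the Dalmau et al.\ approach is what permits the strengthening to cores that the paper discusses in its omitted section.
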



Observe that the family of constraint problems described in
Example~\ref{example:Running} is not covered by the above result,
because the treewidth of the associated hypergraphs is unbounded.



\section{Cooperating Constraint \Languageword{}s}
\label{sect:language-properties}

Whenever constraint scopes overlap, we may ask whether the possible
assignments to the variables in the overlap are essentially
different. It may be that some assignments extend to precisely the
same satisfying assignments in each of the overlapping constraints. If
so, we may as well identify such assignments.

\begin{definition}[Disjoint union of assignments]
 \label{def:disjoint-union}
  Let $\theta_1$ and $\theta_2$ be two assignments of disjoint sets of
  variables $V_1$ and $V_2$, respectively. The disjoint union of
  $\theta_1$ and $\theta_2$, denoted $\theta_1 \oplus \theta_2$, is the 
  assignment of $V_1 \cup V_2$ such that $(\theta_1 \oplus \theta_2)(v) =
  \theta_1(v)$ for all $v \in V_1$, and $(\theta_1 \oplus \theta_2)(v) =
  \theta_2(v)$ for all $v \in V_2$.
%
\end{definition}

\begin{definition}[Projection]
   Let $\Theta$ be a set of assignments of a set of variables $V$. The
   \emph{projection} of $\Theta$ onto a set of variables $X \subseteq V$ is
   the set of assignments $\pi_X(\Theta) = \{\theta|_X \mid \theta \in
   \Theta\}$. 
\end{definition}
   
Note that when $\Theta = \emptyset$ we have $\pi_X(\Theta) = \emptyset$ for any set $X$, 
but when $X = \emptyset$ and $\Theta \neq \emptyset$, we have $\pi_X(\Theta) = \{\bot\}$.

\begin{definition}[Assignment extension]
  Let $e[\delta]$ be a global constraint, 
  and $X \subseteq \vars(\delta)$. For every assignment $\mu$ of $X$, let
  $\ext(\mu, e[\delta]) = \pi_{\vars(\delta) - X}(\{\theta \in e[\delta] \mid
  \theta|_X = \mu\})$.
\end{definition}

In other words, for any assignment $\mu$ of $X$, the set $\ext(\mu, e[\delta])$ is the
set of assignments of $\vars(\delta) - X$ that extend $\mu$ to a
satisfying assignment for $e[\delta]$; 
i.e.,~those assignments $\theta$ for which $\mu \oplus \theta \in e[\delta]$.

\begin{definition}[Extension equivalence]
  \label{def:equiv-tuples}
  Let $e[\delta]$ be a global constraint,
  and $X \subseteq \vars(\delta)$.
  We say that two assignments $\theta_1, \theta_2$ to
  $X$ are \emph{extension equivalent} on $X$ with respect to $e[\delta]$ if
  $\ext(\theta_1, e[\delta]) = \ext(\theta_2,e[\delta])$. 
  We denote this equivalence relation by $\eqt[e[\delta], X]$; 
  that is, $\eqt[e[\delta], X](\theta_1, \theta_2)$
  holds if and only if $\theta_1$ and $\theta_2$ are extension equivalent
  on $X$ with respect to $e[\delta]$. 
\end{definition}

In other words, two assignments to some subset of the variables of a
constraint $e[\delta]$ are extension equivalent if every assignment to
the rest of the variables combines with both of them to give either
two assignments that satisfy $e[\delta]$, or two that falsify
it. 

\begin{example}
\label{example:clauses2classes}
Consider the special case of extension equivalence with respect to a
clause (cf. Example~\ref{example:clauses}).

Given any clause $e[\delta]$, and any non-empty set of variables $X
\subseteq \vars(\delta)$, any assignment to $X$ will either satisfy
one of the corresponding literals specified by $\delta$, or else
falsify all of them.  If it satisfies at least one of them, then any
extension will satisfy the clause, so all such assignments are
extension equivalent. If it falsifies all of them, then an extension
will satisfy the clauses if and only if it satifies one of the other
literals.  Hence the equivalence relation $\eqt[e[\delta], X]$ has
precisely 2 equivalence classes, one containing the single assignment
that falsifies all the literals corresponding to X, and one containing
all other assignments.
\end{example}



\begin{definition}[Intersecting variables]
  Let $S$ be a set of global constraints. We write $\iv(S)$ for the
  set of variables common to all of their scopes, that is, $\iv(S) =
  \displaystyle\bigcap_{e[\delta] \in S} \vars(\delta)$.
\end{definition}

\begin{definition}[Join]
For any set $S$ of global constraints, we define 
the {\em join} of $S$, denoted $\join(S)$, to be a global constraint
$e'[\delta']$ with 
$\vars(\delta') = \displaystyle\bigcup_{e[\delta] \in S} \vars(\delta)$ 
such that for any assignment $\theta$ to $\vars(\delta')$, we have 
$\theta \in e'[\delta']$ if and only if for every $e[\delta] \in S$ we have
   $\theta|_{\vars(\delta)} \in e[\delta]$.
\end{definition}

The join of a set of global constraints may have no simple compact 
description, and computing its extension may be computationally expensive. 
However, we introduce this construct simply in order to describe the combined effect
of a set of global constraints in terms of a single constraint.

\begin{example}
  Let $V = \{v_1,\ldots,v_n\}$, for some $n \geq 3$, be a set of variables 
  with $D(v_i) = \{a, b, c\}$, and let $S = \{e_1[\delta_1], e_2[\delta_2]\}$ 
  be a set of two global constraints as defined below:
  \begin{itemize}
    \item 
    $e_1[\delta_1]$ is a table constraint with $\vars(\delta_1) =
    \{v_1,\ldots,v_{n-1}\}$ which enforces {\em equality}, 
    i.e., $\delta_1 = \{\theta_a,\theta_b, \theta_c\}$, 
    where for each $x \in D(V)$ and $v \in \vars(\delta_1)$, $\theta_x(v) = x$.
    
    \item 
    $e_2[\delta_2]$ is a negative constraint with $\vars(\delta_2) =
    \{v_2,\ldots,v_n\}$ which enforces a {\em not-all-equal} condition, 
    i.e., $\delta_2 = \{\theta_a, \theta_b, \theta_c\}$, 
    where for each $x \in D(V)$ and $v \in \vars(\delta_2)$, $\theta_x(v) = x$.
  \end{itemize}

  We will use substitution notation to write assignments explicitly; 
  thus, an assignment of $\{v, w\}$ that assigns
  $a$ to both variables is written $\{v/a, w/a\}$.

  We have that $\iv(S) = \{v_2,\ldots,v_{n-1}\}$. The equivalence classes of
  assignments to $\iv(S)$ under $\eqt[\join(S), \iv(S)]$ are
  $\{\{v_2/a,\ldots,v_{n-1}/a\}\}$, 
  $\{\{v_2/b,\ldots,v_{n-1}/b\}\}$, and 
  $\{\{v_2/c,\ldots,v_{n-1}/c\}\}$,
  each containing the single assignment shown, as well as (for $n > 3$)
  a final class containing all other assignments, 
  for which we can choose an arbitrary representative assignment, $\theta_0$,
  such as $\{v_2/a,v_3/b,\ldots,v_{n-1}/b\}$.
  
  Each assignment in the first 3 classes has just 2 possible extensions
  that satisfy $\join(S)$,
  since the value assigned to $v_1$ must equal the value assigned to 
  $v_2,\ldots,v_{n-1}$, and the value assigned to $v_n$ must be different.
  The assignment $\theta_0$ has no extensions, 
  since $\ext(\theta_0,e_1[\delta_1]) = \emptyset$.
  
  Hence the number of equivalence classes in $\eqt[\join(S),\iv(S)]$ is
  at most 4, even though the total number of 
  possible assignments of $\iv(S)$ is $3^{n-2}$
\end{example}

\begin{definition}[Cooperating constraint \languageword]
  \label{def:cooperating-language}
  We say that a constraint \languageword\ $\languagesymbol$ is a \emph{cooperating}
  \languageword\ if for any finite set of global constraints $S \subseteq
  \languagesymbol$,
  we can compute a set of assignments of the variables $\iv(S)$ 
  containing at least one representative 
  of each equivalence class of $\eqt[\join(S), \iv(S)]$ in polynomial time
  in the size of $\iv(S)$ and the total size of the constraints in $S$.
\end{definition}

Note that this definition requires two things.  First, that the number
of equivalence classes in the equivalence relation $\eqt[\join(S),
\iv(S)]$ is bounded by some fixed polynomial in the size of $\iv(S)$
and the size of the constraints in $S$.  Secondly, that a suitable set
of representatives for these equivalence classes can be computed
efficiently from the constraints.




\begin{example}
\label{example:clausescooperate}
Consider a constraint \languageword\ consisting entirely of clauses
(of arbitrary arity).  It was shown in
Example~\ref{example:clauses2classes} that for any clause $e[\delta]$
and any non-empty $X \subseteq \vars(\delta)$ the equivalence relation
$\eqt[e[\delta], X]$ has precisely 2 equivalence classes.

If we consider some finite set, $S$, of clauses, then a similar
argument shows that the equivalence relation $\eqt[\join(S), \iv(S)]$
has at most $|S|+1$ classes.  These are given by the single
assignments of the variables in $\iv(S)$ that falsify the literals
corresponding to the variables of $\iv(S)$ in each clause (there are
at most $|S|$ of these --- they may not all be distinct) together with
at most one further equivalence class containing all other assignments
(which must satisfy at least one literal in each clause of $S$).

Hence the total number of equivalence classes in the equivalence
relation $\eqt[\join(S), \iv(S)]$ increases at most linearly with the
number of clauses in $S$, and a representative for each class can be
easily obtained from the descriptions of these clauses, by projecting
the falsifying assignments down to the set of common variables,
$\iv(S)$, and adding at most one more, arbitrary, assignment.

By same argument, if we consider some finite set, $S$, of table
constraints, then the equivalence relation $\eqt[\join(S), \iv(S)]$
has at most one class for each assignment allowed by each table
constraint in $S$, together with at most one further class containing
all other assignments.
\end{example}

In general, arbitrary EGC constraints (cf.~Example~\ref{example:egc})
do not form a cooperating \languageword.  However, we will show that
if we bound the size of the variable domains, then the resulting EGC
constraints do form a cooperating \languageword.

\begin{definition}[Counting function]
  Let $X$ be a set of variables with domain $D = \bigcup_{x \in X} D(x)$.  
  A {\em counting function} for $X$ is any function
  $K : D \rightarrow \mathbb N$ such that $\sum_{a \in D}
  K(a) = |X|$.

  Every assignment $\theta$ to $X$ defines a corresponding counting
  function $K_\theta$ given by $K_\theta(a) = |\{x \in X \mid
  \theta(x) = a\}|$ for every $a \in D$.
\end{definition}

It is easy to verify that no EGC constraint can distinguish two
assignments with the same counting function; for any EGC constraint,
either both assignments satisfy it, or they both falsify it. It
follows that two assignments with the same counting function are
extension equivalent with respect to EGC constraints. 

\begin{definition}[Counting constraints]
  \label{def:counting-constraint}
  A global constraint $e[\delta]$ is called a \emph{counting constraint} 
  if, for any two assignments $\theta_1,\theta_2$ of $\vars(\delta)$ 
  which have the same counting function, either
  $\theta_1,\theta_2 \in e[\delta]$ or $\theta_1,\theta_2 \not\in e[\delta]$.
\end{definition}

EGC constraints are not the only constraint type with this property.
Constraints that require the sum (or the product) of the values of all
variables in their scope to take a particular value, and constraints
that require the minimum (or maximum) value of the variables in their
scope to take a certain value, are also counting constraints.

Another example is given by the NValue constraint type, 
which requires that the number of distinct domain values 
taken by an assignment is a member of a specified set of acceptable numbers.

\begin{example}[NValue constraint type \cite{Beldiceanu:2001,bess-nvalue}]
  In an NValue constraint, $e[\delta]$, the description $\delta$
  specifies a finite set of natural numbers $L_\delta \subset \mathbb
  N$. The algorithm $e$ maps an assignment $\theta$ to 1 if
  $|\{\theta(v)\mid v \in \vars(\delta)\}| \in L_\delta$.
\end{example}


The reason for introducing counting functions is the following key
property, previously noted by Bulatov and Marx
\cite{Bulatov10:lmcs-complexity}.

\begin{property}
  \label{prop:num-counting-funcs}
  The number of possible counting functions for a set of variables $X$
  is at most $\binom{|X|+|D|-1}{|D|-1} = O(|X|^{|D|})$, where $D =
  \bigcup_{x \in X} D(x)$.
\end{property}
\begin{proof}
  If every variable $x \in X$ has $D$ as its set of domain elements,
  that is, $D(x) = D$, then every counting function corresponds to a
  distinct way of partitioning $|X|$ variables into at most $|D|$
  boxes. There are $\binom{|X|+|D|-1}{|D|-1}$ ways of doing so
  \cite[Section~2.3.3]{handbook-discrete}. On the other hand, if there
  are variables $x \in X$ such that $D(x) \subset D$, then that
  disallows some counting functions.
\end{proof}




\begin{theorem}
  \label{thm:coop-language}
  Any constraint \languageword\ that contains only 
  counting constraints with bounded domain size, 
  table constraints, and negative constraints,  
  is a cooperating \languageword.
\end{theorem}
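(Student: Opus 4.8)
The plan is to verify the two requirements of Definition~\ref{def:cooperating-language} directly for an arbitrary finite set $S \subseteq \languagesymbol$, where $S$ partitions into a set $S_{\mathrm{cnt}}$ of counting constraints, a set $S_{\mathrm{tab}}$ of table constraints, and a set $S_{\mathrm{neg}}$ of negative constraints. The key observation is that extension equivalence with respect to $\join(S)$ refines into a product of the extension-equivalence relations coming from the individual constraints: if $\theta_1, \theta_2$ are assignments of $\iv(S)$ that agree on $\ext(\cdot, e[\delta])$ for every $e[\delta] \in S$ (where for each constraint we restrict $\theta_i$ to $\vars(\delta)$), then they agree on $\ext(\cdot, \join(S))$. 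Hence the number of classes of $\eqt[\join(S), \iv(S)]$ is at most the product, over $e[\delta] \in S$, of the number of classes of $\eqt[e[\delta], \iv(S)\cap\vars(\delta)]$ — actually of $\eqt[e[\delta], \vars(\delta')\cap\vars(\delta)]$ where we first intersect with $\vars(\delta)$ — and it suffices to bound each factor polynomially and to produce representatives efficiently for each.

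So the second step is to bound the individual factors. For a table constraint $e[\delta]$ with $X \subseteq \vars(\delta)$, two assignments of $X$ with different projections from $\delta$ may be inequivalent, but the number of classes is at most $|\delta|+1$: each assignment of $X$ that is the $X$-restriction of some row of $\delta$ contributes at most one class, plus one class for all assignments of $X$ extending no row (these all have $\ext = \emptyset$); representatives are read off directly from $\delta$ by projection, adding one arbitrary extra assignment. For a negative constraint the dual argument (as in Example~\ref{example:clausescooperate}) gives at most $|\delta|+1$ classes, again with representatives obtained by projecting the forbidden assignments and adding one more. For a counting constraint $e[\delta]$ with bounded domain $D$, we use the remark following Definition~\ref{def:counting-constraint}: two assignments of $X = \vars(\delta')\cap\vars(\delta)$ with the same counting function on $X$ are extension equivalent w.r.t.\ $e[\delta]$, so the number of classes is at most the number of counting functions for $X$, which by Property~\ref{prop:num-counting-funcs} is $O(|X|^{|D|})$, a polynomial since $|D|$ is bounded; a representative for each counting function is easy to write down. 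Multiplying, the total number of classes of $\eqt[\join(S),\iv(S)]$ is at most $\prod_{e[\delta]\in S}(\text{polynomial factor})$, which is polynomial in $|\iv(S)|$ and the total description size of $S$ because $|S|$ itself is bounded by that total size and each factor is bounded by it too — wait, a product of $|S|$ polynomial factors is not automatically polynomial. This is the point requiring care: I would instead bound the number of classes by the \emph{sum} rather than the product. For that, observe that the equivalence $\eqt[\join(S),\iv(S)]$ is the common refinement, and a sharper count is available: an assignment's class is determined by the list of its component classes, but for table and negative constraints all but a bounded (indeed, $1$) number of component-classes collapse, so genuinely "interesting" behaviour is confined to few assignments. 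Concretely, for the table/negative constraints one shows, as in Example~\ref{example:clausescooperate}, that there are at most $\sum_{e[\delta]\in S_{\mathrm{tab}}\cup S_{\mathrm{neg}}}|\delta|$ "special" assignments and one generic class; and for counting constraints the relation $\eqt[\join(S_{\mathrm{cnt}}),\iv(S)]$ still has only polynomially many classes because it is refined by the single relation "same counting function on $\iv(S)$", which has $O(|\iv(S)|^{|D|})$ classes. Combining a bounded number of such coarse partitions — one for the counting part, one special-assignment list for the table/negative part — the common refinement has at most $O(|\iv(S)|^{|D|}) \cdot \big(\sum|\delta| + 1\big)$ classes, which is polynomial as required.

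The main obstacle, as flagged above, is precisely this product-versus-sum issue: naively refining one partition per constraint multiplies class counts and destroys the polynomial bound, so the argument must exploit that the table and negative constraints each contribute only a constant number of "nontrivial" distinctions beyond one generic class, and that all the counting constraints are simultaneously refined by a single $O(|\iv(S)|^{|D|})$-class partition rather than one partition each. Once the class count is polynomial, efficient computation of representatives is routine: enumerate the $O(|\iv(S)|^{|D|})$ counting functions on $\iv(S)$, realise each by an explicit assignment, and also project each row of each table description and each forbidden tuple of each negative description down to $\iv(S)$; the union of these assignments, together with one arbitrary extra assignment, contains a representative of every class of $\eqt[\join(S),\iv(S)]$, and the whole construction runs in polynomial time in $|\iv(S)|$ and the total size of $S$.
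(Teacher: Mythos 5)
Your final argument is essentially the paper's own proof: you split $S$ into counting constraints versus table/negative constraints, invoke Property~\ref{prop:num-counting-funcs} to enumerate counting functions on $\iv(S)$, and take as representatives one assignment per counting function together with the projections onto $\iv(S)$ of the listed (allowed/forbidden) tuples plus one arbitrary further assignment, arguing that any remaining assignment is extension equivalent to one of these. The per-constraint product bound you start with (and rightly reject) is not in the paper, but your corrected refinement argument and the resulting representative set coincide with the paper's construction, so this is the same approach and it is correct.
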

\begin{proof}
  Let $\languagesymbol$ be a constraint \languageword\ containing only 
  global constraints of the specified types, 
  and let $S \subseteq \languagesymbol$ be a finite subset of $\languagesymbol$. 
  Partition $S$ into two subsets: $S^C$, containing only counting constraints and
  $S^{\pm}$ containing only table and negative constraints.


  Let $\mathcal K$ be a set containing assignments of $\iv(S)$, such
  that for every counting function $K$ for $\iv(S)$, there is some
  assignment $\theta_K \in \mathcal K$ with $K_\theta = K$.  By
  Property~\ref{prop:num-counting-funcs}, the number of counting
  functions for $\iv(S)$ is bounded by $O(|\iv(S)|^d)$, where $d$ is
  the bound on the domain size for the counting constraints in
  $\languagesymbol$.  Hence such a set $\mathcal K$ can be computed in
  polynomial time in the size of $\iv(S)$.
  
  For each constraint in $S^{\pm}$ we have that the description is a
  list of assignments (these are the allowed assignments for the table
  constraints and the forbidden assignments for the negative
  constraints, see Example~\ref{example:table-const}).

  As we described in Example~\ref{example:clausescooperate}, for each
  table constraint $e[\delta] \in S$, we can obtain a representative
  for each equivalence class of $\eqt[e[\delta], \iv(S)]$ by taking
  the projection onto $\iv(S)$ of each allowed assignment, which we
  can denote by $\pi_{\iv(S)}(\delta)$, together with at most one
  further, arbitrary, assignment, $\theta_0$, that is not in this set.
  This set of assignments contains at least one representative for
  each equivalence class of $\eqt[e[\delta], \iv(S)]$ (and possibly
  more than one representative for some of these classes).
    
  Similarly, for each negative constraint $e[\delta] \in S$, we can
  obtain a representative for each equivalence class of
  $\eqt[e[\delta], \iv(S)]$, by taking the projection onto $\iv(S)$ of
  each forbidden assignment, which we can again denote by
  $\pi_{\iv(S)}(\delta)$, together with at most one further,
  arbitrary, assignment, $\theta_0$, that is not in this set.  
  
  Now consider the set of assignments $\mathcal A = \mathcal K \cup
  \{\theta_0\} \cup \displaystyle\bigcup_{e[\delta] \in S^{\pm}}
  \pi_{\iv(S)}(\delta)$, where $\theta_0$ is an arbitrary assignment
  of $\iv(S)$ which does not occur in $\pi_{\iv(S)}(\delta)$ for any
  $e[\delta] \in S$ (if such an assignment exists).  We claim that
  this set of assignments contains at least one representative for
  each equivalence class of $\eqt[\join(S), \iv(S)]$ (and possibly
  more than one for some classes).
  
  To establish this claim we will show that any assignment $\theta$ of
  $\iv(S)$ that is not in $\mathcal A$ must be extension equivalent to
  some member of $\mathcal A$.  Let $\theta$ be an assignment of
  $\iv(S)$ that is not in $\mathcal A$ (if such an assignment exists).
  If $S^{\pm}$ contains any positive constraints, then $\theta$ has an
  empty set of extensions to these constraints, and hence is extension
  equivalent to $\theta_0$.  Otherwise, any extension of $\theta$ will
  satisfy all negative constraints in $S^{\pm}$, so the extensions of
  $\theta$ that satisfy $\join(S)$ are completely determined by the
  counting function $K_\theta$. In this case $\theta$ will be
  extension equivalent to some element of $\mathcal K$.
    
  Moreover, the set of assignments $\mathcal A$ can be computed from $S$
  in polynomial time in the the size of $\iv(S)$ and the total size of the descriptions
  of the constraints in $S^{\pm}$.
  Therefore, $\languagesymbol$ is a cooperating \languageword\ as described in 
  Definition~\ref{def:cooperating-language}.
\end{proof}

\begin{example}
\label{example:Runningcooperates}
By Theorem~\ref{thm:coop-language}, the constraints in
Example~\ref{example:Running} form a cooperating \languageword. 
\end{example}

\section{Polynomial-time Reductions}
\label{sect:main-results}

In this section, we will show that, for any constraint problem over a
cooperating \languageword, a set of variables that all occur in
exactly the same set of constraint scopes can be replaced by a single
new variable with an appropriate domain, to give a polynomial-time
reduction to a smaller problem.


\begin{definition}[Dual of a hypergraph]
  Let $G = \tup{V, H}$ be a hypergraph. The \emph{dual} $G^*$ of $G$
  is a hypergraph with vertex set $H$ and a hyperedge $\{ h \in H \mid
  v \in h \}$ for every $v \in V$. For a class $\mathcal H$ of
  hypergraphs, let $\mathcal H^* = \{G^* \mid G \in \mathcal H\}$.
\end{definition}

\begin{example}
  \label{example:dual}
  Consider the hypergraph $G$ in Figure~\ref{fig:RunningExampleG}. The
  dual, $G^*$, of this hypergraph has vertex set $\{C_1, C_2, C_3,
  C_4, C_5\}$ and five hyperedges $\{C_1,C_2\}$, $\{C_1,C_3\}$,
  $\{C_2,C_4\}$, $\{C_3,C_4\}$ and $\{C_4,C_5\}$. This transformation
  is illustrated in Figure~\ref{fig:RunningExampleGtoGD}.

\begin{figure}[ht]
  \centering
  \includegraphics[width=0.8\textwidth]{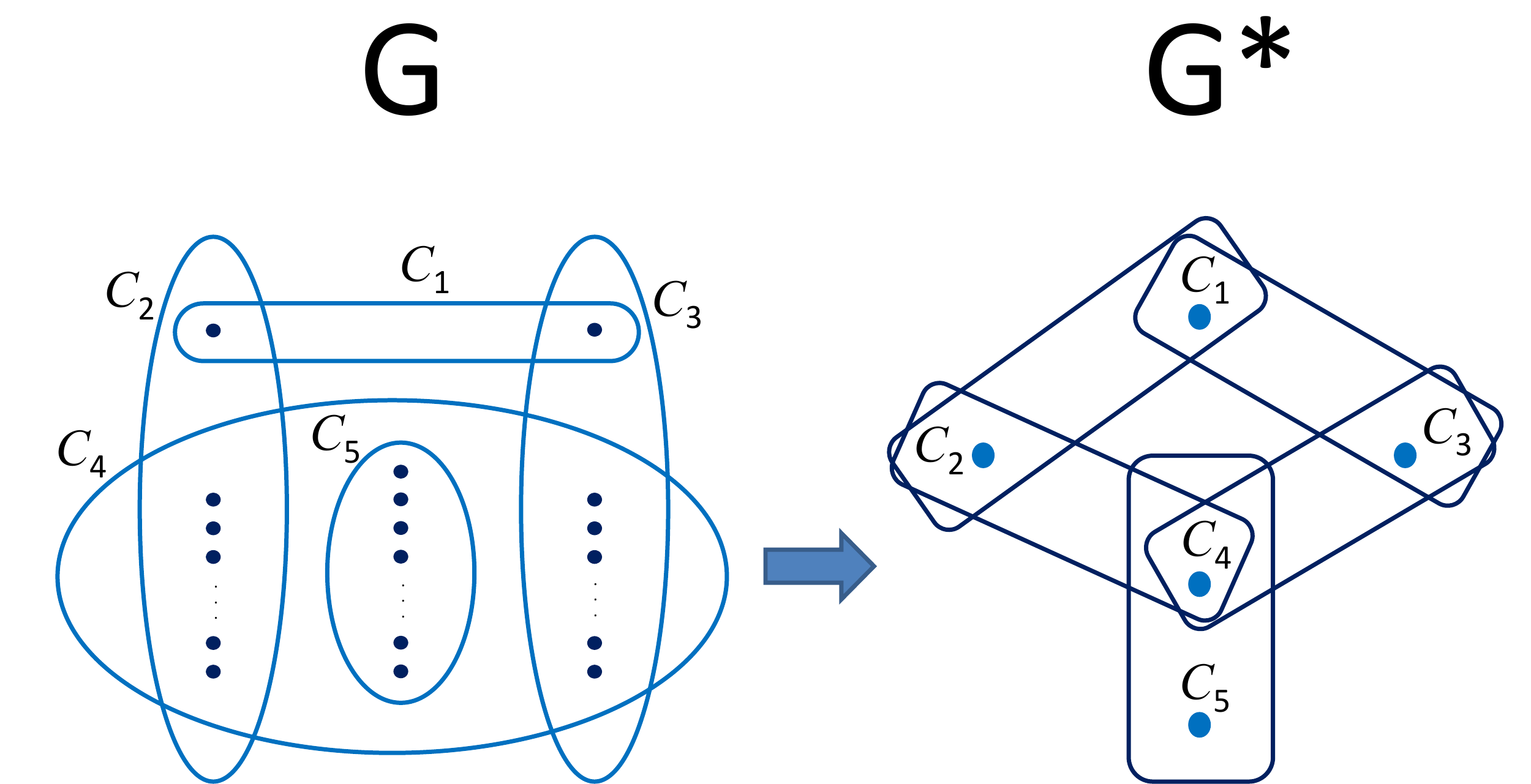}
  \caption{$G$ and $G^{*}$ from Example~\ref{example:dual}}
  \label{fig:RunningExampleGtoGD}
\end{figure}
\end{example}

Note that the dual of the dual of a hypergraph is not necessarily the
original hypergraph, since we do not allow multiple identical
hyperedges.

\begin{example}
  \label{example:dualdual}
  Consider the dual hypergraph $G^*$ defined in
  Example~\ref{example:dual}. Taking the dual of this hypergraph
  yields $G^{**}$, with vertex set $\{h_1,\ldots,h_5\}$ (corresponding
  to the 5 hyperedges in $G^*$) and 5 distinct hyperedges, as shown in
  Figure~\ref{fig:RunningExampleGDtoGDD}.

\begin{figure}[ht]
  \centering
  \includegraphics[width=0.8\textwidth]{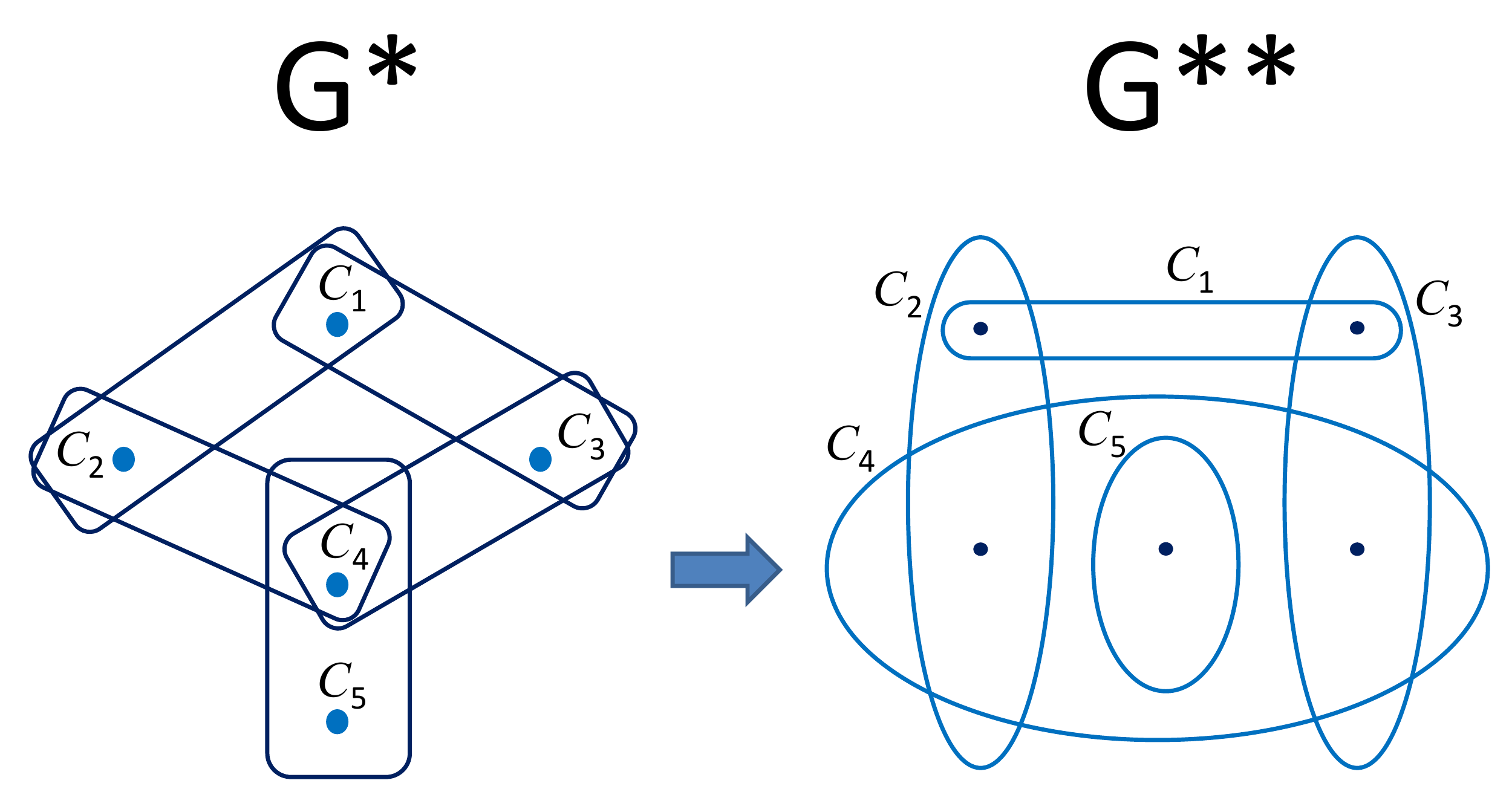}
  \caption{$G^*$ and $G^{**}$ from Example~\ref{example:dualdual}}
  \label{fig:RunningExampleGDtoGDD}
\end{figure}
\end{example}

In the example above, taking the dual of a hypergraph twice had the
effect of merging precisely those sets of variables that occur in the
same set of hyperedges.  It is easy to verify that this is true in
general: Taking the dual twice equates precisely those variables that
occur in the same set of hyperedges.

\begin{lemma}
  \label{lem:dual-vars}
  For any hypergraph $G$, the hypergraph $G^{**}$ has precisely one
  vertex corresponding to each maximal subset of vertices of $G$ that
  occur in the same set of hyperedges.
\end{lemma}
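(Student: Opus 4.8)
The plan is to unfold the definition of the dual twice and track where each vertex of $G^{**}$ comes from. Write $G = \tup{V, H}$. By definition, $G^*$ has vertex set $H$, and for each $v \in V$ it has a hyperedge $h_v^* = \{h \in H \mid v \in h\}$; note that $G^*$ may have fewer than $|V|$ hyperedges, since two variables $u, v$ occurring in exactly the same set of hyperedges of $G$ give rise to the \emph{same} set $h_u^* = h_v^*$, and duplicates are discarded. So the hyperedges of $G^*$ are in bijection with the distinct sets of the form $\{h \in H \mid v \in h\}$, i.e.\ with the equivalence classes of the relation ``$u \sim v$ iff $u$ and $v$ lie in exactly the same hyperedges of $G$''. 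Equivalently, the hyperedges of $G^*$ correspond bijectively to the maximal subsets of $V$ all of whose members occur in the same set of hyperedges of $G$.

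Now take the dual again. By definition, the vertex set of $G^{**} = (G^*)^*$ is the hyperedge set of $G^*$. Combining this with the observation of the previous paragraph immediately yields the claim: the vertices of $G^{**}$ are in bijection with the distinct sets $\{h \in H \mid v \in h\}$ for $v \in V$, hence with the maximal $\sim$-classes of vertices of $G$, i.e.\ with the maximal subsets of vertices of $G$ that occur in the same set of hyperedges. So there is precisely one vertex of $G^{**}$ for each such maximal subset, as required.

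The only thing needing a little care — and the one place I would be careful in the write-up — is the bookkeeping around duplicate hyperedges. The statement is \emph{not} true if one allows hyperedge multisets: then $G^{**}$ would have one vertex per variable. The key point is exactly that the dual construction, as defined in the excerpt (``we do not allow multiple identical hyperedges''), quotients out the $\sim$-relation on the first application. After that, the second dualisation only renames hyperedges as vertices and does no further merging. I would also note in passing that we do not need to say anything about the hyperedges of $G^{**}$ for this statement — only its vertex set — so nothing further about connectivity or which vertices land together in a hyperedge is required. A one-line remark that the earlier Example~\ref{example:dualdual} illustrates exactly this merging phenomenon is enough to orient the reader.
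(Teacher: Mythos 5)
Your proof is correct and is exactly the definitional unfolding the paper has in mind: the paper omits a proof, remarking only that the claim is ``easy to verify,'' and your argument --- that the hyperedges of $G^{*}$ are the distinct sets $\{h \in H \mid v \in h\}$ (duplicates discarded), hence in bijection with the equivalence classes of vertices occurring in the same hyperedges, and that the second dualisation merely turns these hyperedges into the vertices of $G^{**}$ --- supplies precisely that verification, including the correct emphasis on the no-repeated-hyperedges convention.
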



Next, we combine the idea of the dual with the usual notion of
treewidth to create a new measure of width.

\begin{definition}[twDD]
  \label{def:tdd}
  Let $G$ be a hypergraph. The treewidth of the dual of the dual (twDD)
  of $G$ is $\TDD(G) = \tw(G^{**})$.

  For a class of hypergraphs $\mathcal H$, we define $\TDD(\mathcal H)
  = \tw(\mathcal H^{**})$.
\end{definition}

\begin{example}
\label{example:RunningtwDD}
Consider the class $\mathcal H$ of hypergraphs of the family of
problems described in Example~\ref{example:Running}.  Whatever the
value of $n$, the dual hypergraph, $G^*$ is the same, as shown in
Figure~\ref{fig:RunningExampleGtoGD}.  Hence for all problems in this
family the hypergraph $G^{**}$ is as shown in
Figure~\ref{fig:RunningExampleGDtoGDD}, and can be shown to have
treewidth 3.  Hence $\TDD(\mathcal H) = 3$.
\end{example}

When replacing a set of variables in a CSP instance with a single
variable, we will use the following definition.

\begin{definition}[Quotient of a CSP instance]
  \label{def:equiv-repr}
  Let $P = \tup{V,C}$ be a CSP instance and $X \subseteq V$ be a
  non-empty subset of variables that all occur in the scopes of the
  same set $S$ of constraints. The {\em quotient} of $P$ with respect
  to $X$, denoted $P^X$, is defined as follows.
  \begin{itemize}
  \item
  The variables of $P^X$ are given by $V^X = (V - X) \cup \{v_X\}$, 
  where $v_X$ is a fresh variable, and 
  the domain of $v_X$ is the set of equivalence classes of $\eqt[\join(S),X]$.
  
  \item
  The constraints of $P^X$ are unchanged, except that each constraint $e[\delta] \in S$
  is replaced by a new constraint $e^X[\delta^X]$, where
  $\vars(\delta^X) = (\vars(\delta) - X) \cup \{v_X\}$.
  For any assignment $\theta$ of $\vars(\delta^X)$,
  we define $e^X[\delta^X](\theta)$ to be 1 if and only if 
  $\theta|_{\vars(\delta) - X} \oplus \mu \in e[\delta]$, where $\mu$ is 
  a representative of the equivalence class $\theta(v_X)$.
  \end{itemize}
\end{definition}

We note that, by Definition~\ref{def:equiv-tuples}, the value of
$e^X[\delta^X]$ specified in Definition~\ref{def:equiv-repr} is
well-defined, that is, it does not depend on the specific
representative chosen for the equivalence class $\theta(v_X)$, since
each representative has the same set of possible extensions.

\begin{lemma}
  \label{lem:equiv-repr}
  Let $P = \tup{V,C}$ be a CSP instance and $X \subseteq V$ be a
  non-empty subset of variables that all occur in the scopes of the
  same set of constraints. The instance $P^X$ has a solution if and
  only if $P$ has a solution.
\end{lemma}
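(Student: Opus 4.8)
The plan is to establish the equivalence by exhibiting explicit maps between solutions of $P$ and solutions of $P^X$. The key point is that, by hypothesis, $X$ occurs in the scopes of exactly the constraints in $S$, so the only constraints affected by the quotient operation are those in $S$; every other constraint of $P$ has a scope disjoint from $X$ and is carried over unchanged to $P^X$, with an identical scope.

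First I would handle the direction ``$P$ has a solution $\Rightarrow$ $P^X$ has a solution''. Given a solution $\theta$ of $P$, its restriction $\theta|_X$ is an assignment of $X$, and since $X$ lies in the scope of every constraint in $S$, the assignment $\theta|_X$ lies in some equivalence class $q$ of $\eqt[\join(S),X]$. Define $\theta^X$ on $V^X = (V-X)\cup\{v_X\}$ by $\theta^X(v) = \theta(v)$ for $v \in V - X$, and $\theta^X(v_X) = q$. I claim $\theta^X$ is a solution of $P^X$. For any constraint not in $S$ this is immediate since the constraint and its scope are unchanged and $\theta^X$ agrees with $\theta$ there. For a constraint $e[\delta] \in S$, let $\mu$ be a representative of $q$; by well-definedness of $e^X[\delta^X]$ (noted just before the lemma), $e^X[\delta^X](\theta^X|_{\vars(\delta^X)}) = 1$ iff $\theta^X|_{\vars(\delta)-X} \oplus \mu \in e[\delta]$. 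Now $\theta^X|_{\vars(\delta)-X} = \theta|_{\vars(\delta)-X}$, and since $\mu$ and $\theta|_X$ are in the same class $q$, they have the same extensions; as $\theta|_{\vars(\delta)} = \theta|_X \oplus \theta|_{\vars(\delta)-X} \in e[\delta]$, we get $\theta|_{\vars(\delta)-X} \in \ext(\theta|_X, e[\delta]) = \ext(\mu, e[\delta])$, so $\theta|_{\vars(\delta)-X}\oplus\mu \in e[\delta]$, as required.

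Conversely, for ``$P^X$ has a solution $\Rightarrow$ $P$ has a solution'', take a solution $\theta^X$ of $P^X$; it assigns $v_X$ some equivalence class $q$. Pick any representative $\mu$ of $q$ (an assignment of $X$), and define $\theta$ on $V$ by $\theta(v) = \theta^X(v)$ for $v \in V-X$ and $\theta|_X = \mu$. For constraints outside $S$, $\theta$ agrees with $\theta^X$ on their (unchanged) scopes, so they are satisfied. For $e[\delta] \in S$: since $\theta^X$ satisfies $e^X[\delta^X]$, we have $\theta^X|_{\vars(\delta)-X} \oplus \mu' \in e[\delta]$ where $\mu'$ is a representative of $\theta^X(v_X) = q$; but $\mu$ and $\mu'$ are both representatives of $q$, hence extension equivalent, so $\theta^X|_{\vars(\delta)-X} \oplus \mu \in e[\delta]$ as well. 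Since $\theta|_{\vars(\delta)} = \theta^X|_{\vars(\delta)-X} \oplus \mu$, the constraint $e[\delta]$ is satisfied by $\theta$.

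I do not expect a serious obstacle here; the proof is essentially bookkeeping once the definitions of the quotient, of $\ext$, and of extension equivalence are unpacked. The one place that requires care is checking that $v_X$ really can be given a \emph{single} value that works simultaneously for \emph{all} constraints in $S$: this is exactly why the domain of $v_X$ is the set of classes of $\eqt[\join(S),X]$ rather than of the individual relations $\eqt[e[\delta],X]$ — taking the join ensures a common refinement, so one representative $\mu$ of $\theta|_X$'s class extends correctly in every $e[\delta]\in S$ at once. I would state this explicitly as the crux of the argument.
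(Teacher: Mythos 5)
Your overall strategy is the same as the paper's: convert a solution of $P$ into one of $P^X$ by replacing its restriction to $X$ with the corresponding class of $\eqt[\join(S),X]$, and conversely replace the value of $v_X$ by a representative of that class; the paper's own proof is only a sketch saying exactly this. However, the step you yourself identify as the crux is where your justification does not go through as written. You claim that because $\theta|_X$ and $\mu$ lie in the same class of $\eqt[\join(S),X]$ they satisfy $\ext(\theta|_X,e[\delta])=\ext(\mu,e[\delta])$ for each \emph{individual} $e[\delta]\in S$ (and, in the converse direction, that two representatives $\mu,\mu'$ of the same class can be exchanged inside a single constraint). This does not follow: equivalence with respect to $\join(S)$ is \emph{coarser} than the common refinement of the relations $\eqt[e[\delta],X]$, not finer, so your closing remark has the refinement direction backwards. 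Concretely, take $e_1[\delta_1],e_2[\delta_2]\in S$ with $\vars(\delta_1)=\vars(\delta_2)=X$ and assignments $\mu\in e_1[\delta_1]$ falsifying $e_2[\delta_2]$, and $\mu'\in e_2[\delta_2]$ falsifying $e_1[\delta_1]$: both have empty extension sets with respect to $\join(S)$, hence are join-equivalent, yet $\ext(\mu,e_1[\delta_1])\neq\ext(\mu',e_1[\delta_1])$. (The same caveat applies to the well-definedness remark you cite; the safe reading of Definition~\ref{def:equiv-repr} is that one fixed representative per class is used uniformly for all constraints of $S$.)

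The repair is to apply join-equivalence once, globally, rather than constraint by constraint. In the forward direction, let $W=\bigcup_{e[\delta]\in S}\vars(\delta)$ and $\nu=\theta|_{W-X}$. Since $\theta$ satisfies every constraint of $S$, we have $\nu\in\ext(\theta|_X,\join(S))=\ext(\mu,\join(S))$, so $\mu\oplus\nu\in\join(S)$; restricting to $\vars(\delta)$ (which contains $X$) gives $\mu\oplus\theta|_{\vars(\delta)-X}\in e[\delta]$ for every $e[\delta]\in S$, which is exactly the condition $e^X[\delta^X]$ checks. In the backward direction, taking $\mu$ to be the representative of $\theta^X(v_X)$ used in Definition~\ref{def:equiv-repr}, the assumption that $\theta^X$ satisfies $e^X[\delta^X]$ directly yields $\theta^X|_{\vars(\delta)-X}\oplus\mu\in e[\delta]$ for every $e[\delta]\in S$, so setting $\theta|_X=\mu$ needs no equivalence step at all. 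With these two changes your argument is complete and coincides with the paper's intended proof.
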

\begin{proof}[Sketch]
  Let $P = \tup{V, C}$ and $X$ be given, and let $S \subseteq C$ be
  the set of constraints $e[\delta]$ such that $X \subseteq \vars(\delta)$.
  
  Construct the instance $P^X$ as specified in
  Definition~\ref{def:equiv-repr}. Any solution to $P$ can be
  converted into a corresponding solution for $P^X$, and vice versa.
  This conversion process just involves replacing the part of the
  solution assignment that gives values to the variables in the set
  $X$ with an assignment that gives a suitable value to the new
  variable $v_X$.


\end{proof}


\begin{theorem}
  \label{thm:dd-csp}
  Any CSP instance $P$ can be converted to an instance $P'$  
  with $\hyp(P') = \hyp(P)^{**}$, 
  such that $P'$ has a solution if and only if $P$ does.
  Moreover, if $P$ is over a cooperating \languageword, this conversion can be 
  done in polynomial time.
\end{theorem}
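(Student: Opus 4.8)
The plan is to build $P'$ from $P$ by iterating the quotient construction of Definition~\ref{def:equiv-repr}. By Lemma~\ref{lem:dual-vars}, the vertices of $\hyp(P)^{**}$ are exactly the maximal sets of variables of $P$ that lie in the same set of constraint scopes. So I would fix this partition of $V$ into maximal ``co-occurrence'' blocks $X_1,\dots,X_k$, and process them one at a time: starting from $P_0 = P$, form $P_i = (P_{i-1})^{X_i}$, where at stage $i$ the block $X_i$ still occurs in the scopes of a common set $S_i$ of constraints (the constraints replaced at earlier stages have had their scopes modified, but a variable of $X_i$ is in scope $\vars(\delta)$ of the modified constraint iff it was in the original, so the ``common scope set'' condition is preserved). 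Set $P' = P_k$. Each step preserves solvability by Lemma~\ref{lem:equiv-repr}, so by induction $P'$ has a solution iff $P$ does.

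Next I would check that $\hyp(P') = \hyp(P)^{**}$. After processing $X_i$, the variable set becomes $(V - X_i) \cup \{v_{X_i}\}$ and every constraint scope containing $X_i$ has $X_i$ replaced by $v_{X_i}$; constraints not containing $X_i$ are untouched, and since the $X_j$ are disjoint, processing them in any order yields a final instance whose variables are exactly $\{v_{X_1},\dots,v_{X_k}\}$ and whose hyperedges are the images of the original hyperedges under the map sending each variable to the block containing it. That is precisely the description of $\hyp(P)^{**}$ obtained by composing the two dual operations (a vertex of $\hyp(P)^{**}$ per block, a hyperedge per original hyperedge, with block $v_{X_j}$ on hyperedge $h$ iff some $v \in X_j$ lies on $h$, equivalently iff all of $X_j$ lies on $h$). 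I would spell out this identification carefully, noting that collapsing a block to a single vertex cannot create or destroy the containments that define the dual-of-dual, and that no two original hyperedges become equal that were not already forced equal (they correspond to distinct vertices of $\hyp(P)^*$).

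For the complexity claim, observe that each quotient step requires, for the block $X_i$ and its common constraint set $S_i$, computing the domain of $v_{X_i}$, i.e.\ a set of representatives of the equivalence classes of $\eqt[\join(S_i), X_i]$. Since $X_i = \iv(S_i)$ (every variable of $X_i$ is in every scope in $S_i$, and by maximality no variable outside $X_i$ is), and $P$ is over a cooperating \languageword, Definition~\ref{def:cooperating-language} gives such a set of representatives in time polynomial in $|X_i|$ and the total size of the constraints in $S_i$. The number of classes is polynomially bounded, so the new constraints $e^{X_i}[\delta^{X_i}]$ have polynomially many new ``values'' to reason about, and their defining algorithm just invokes the old one on the representative; the description of $e^{X_i}[\delta^{X_i}]$ is the old description together with (a pointer to) the representative list, so its size grows only polynomially. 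There are at most $|V|$ blocks, so the whole process is polynomial.

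The main obstacle I anticipate is bookkeeping around the repeated application of the quotient: verifying that after several steps a block $X_i$ still satisfies the hypothesis of Definition~\ref{def:equiv-repr} (same set of constraint scopes), that $\join(S_i)$ and its extension equivalence are unaffected by the relabelling done in previous steps on disjoint blocks, and — for the size bound — that ``total size of the constraints in $S_i$'' stays polynomial even though earlier steps replaced some constraints by quotiented ones carrying representative lists. I would handle this by arguing that processing disjoint blocks commutes, so I may assume each block is processed against the original constraints with only the already-quotiented coordinates (which belong to other, disjoint blocks) renamed; the renaming is a bijection on those coordinates and hence preserves all the relevant structure, and the cumulative blow-up is the sum over blocks of polynomial terms, still polynomial in the size of $P$.
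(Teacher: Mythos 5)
Your proposal matches the paper's proof essentially step for step: partition the variables into maximal co-occurrence blocks, quotient block by block via Definition~\ref{def:equiv-repr}, and invoke Lemma~\ref{lem:dual-vars} for $\hyp(P')=\hyp(P)^{**}$, Lemma~\ref{lem:equiv-repr} for equisatisfiability, and Definition~\ref{def:cooperating-language} for the polynomial bound, with your commuting-disjoint-blocks bookkeeping merely making explicit what the paper leaves implicit. One caveat: your parenthetical claim that $X_i = \iv(S_i)$ ``by maximality'' is not quite right in general (a variable outside $X_i$ can lie in every scope of $S_i$ provided it also lies in further scopes), but the paper's own proof silently makes the same identification when it appeals to Definition~\ref{def:cooperating-language}, so this is a shared wrinkle rather than a divergence from the paper's argument.
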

\begin{proof}
  Let $P = \tup{V,C}$ be a CSP instance.  For each variable $v \in V$
  we define $S(v) = \{e[\delta] \in C \mid v \in \vars(\delta)\}$ We
  then partition the vertices of $P$ into subsets $X_1,\ldots,X_k$,
  where each $X_i$ is a maximal subset of variables $v$ that share the
  same value for $S(v)$.
  
  We initially set $P_0 = P$. Then, for each $X_i$ in turn, we set
  $P_i = (P_{i-1})^X$.  Finally we set $P' = P_k$. By
  Lemma~\ref{lem:dual-vars}, $\hyp(P') = \hyp(P)^{**}$, and by
  Lemma~\ref{lem:equiv-repr}, $P'$ has a solution if and only if $P$
  has a solution.

  Finally, if $P$ is over a cooperating \languageword, then by
  Definition~\ref{def:cooperating-language}, we can compute the
  domains of each new variable introduced in polynomial time in the
  size of each $X_i$ and the total size of the constraints.  Hence we
  can compute $P'$ in polynomial time.
\end{proof}

Using Theorem~\ref{thm:dd-csp}, 
we can immediately get a new tractable CSP class by extending Theorem~\ref{thm:dalmau}.

\begin{theorem}
  \label{thm:tractable-hyp-class}
  Let $\languagesymbol$ be a constraint \languageword\ and $\mathcal
  H$ a class of hypergraphs.  $\CSP(\mathcal H, \languagesymbol)$ is
  tractable if $\languagesymbol$ is a cooperating \languageword{} and
  $\TDD(\mathcal H) < \infty$.
\end{theorem}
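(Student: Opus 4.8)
The plan is to combine Theorem~\ref{thm:dd-csp} with Theorem~\ref{thm:dalmau} in the obvious way. Given a class of hypergraphs $\mathcal H$ with $\TDD(\mathcal H) < \infty$ and a cooperating \languageword\ $\languagesymbol$, I would take an arbitrary instance $P \in \CSP(\mathcal H, \languagesymbol)$ and apply Theorem~\ref{thm:dd-csp} to obtain, in polynomial time, an equivalent instance $P'$ with $\hyp(P') = \hyp(P)^{**}$. Since $\hyp(P) \in \mathcal H$, we have $\hyp(P') \in \mathcal H^{**}$, and by hypothesis $\tw(\mathcal H^{**}) = \TDD(\mathcal H) < \infty$, so $\hyp(P')$ has bounded treewidth.

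The one genuine wrinkle is that Theorem~\ref{thm:dalmau} is stated for classes $\CSP(\mathcal H', \languagesymbol')$ over a fixed \languageword, whereas the constraints $e^X[\delta^X]$ appearing in $P'$ are of a new type not necessarily lying in $\languagesymbol$. I would address this by observing that Theorem~\ref{thm:dalmau} in fact holds for \emph{any} \languageword\ whose constraints can be evaluated in polynomial time: the bounded-treewidth algorithm only needs, for each constraint, to enumerate the tuples it allows on a bag of the tree decomposition, and each such bag has bounded size, so the number of candidate tuples to test is polynomial. The constraints $e^X[\delta^X]$ are polynomial-time checkable (evaluating $e^X[\delta^X](\theta)$ amounts to picking a representative $\mu$ of the class $\theta(v_X)$ and calling the original polynomial-time algorithm $e$ on $\theta|_{\vars(\delta)-X} \oplus \mu$), and their domains — the sets of equivalence classes — have polynomial size because $\languagesymbol$ is cooperating. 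Hence $P'$ lies in a class to which Theorem~\ref{thm:dalmau} applies.

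So the argument runs: reduce $P$ to $P'$ in polynomial time (Theorem~\ref{thm:dd-csp}); note $\hyp(P')$ has treewidth at most $\TDD(\mathcal H)$, which is bounded; note the transformed constraints remain polynomial-time evaluable over polynomial-size domains; invoke (the mild extension of) Theorem~\ref{thm:dalmau} to solve $P'$ in polynomial time; and report the answer for $P$, which is correct since $P'$ and $P$ are equi-satisfiable. Composing a polynomial-time reduction with a polynomial-time decision procedure yields a polynomial-time procedure, so $\CSP(\mathcal H, \languagesymbol)$ is tractable.

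The main obstacle, such as it is, is the bookkeeping in the previous paragraph: making sure that Theorem~\ref{thm:dalmau} as cited is actually applicable to $P'$, whose constraints are not over $\languagesymbol$. If one wants to avoid relying on an extended reading of Theorem~\ref{thm:dalmau}, the alternative is to replace each $e^X[\delta^X]$ in $P'$ by an explicit table constraint listing its satisfying tuples — feasible in polynomial time precisely because each scope $\vars(\delta^X)$ has all-but-boundedly-many variables with polynomial-size domains and at most one variable ($v_X$) also of polynomial-size domain, giving only polynomially many tuples to enumerate, except that bounded treewidth does not by itself bound $|\vars(\delta^X)|$. So the cleanest route is indeed to appeal to the fact that the treewidth-based algorithm underlying Theorem~\ref{thm:dalmau} is agnostic to the representation of the constraints as long as membership is decidable in polynomial time and domains are of polynomial size, both of which hold here.
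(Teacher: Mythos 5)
Your proposal is correct and takes essentially the same route as the paper: reduce $P$ to $P'$ via Theorem~\ref{thm:dd-csp}, note that $\tw(\hyp(P')) = \tw(\hyp(P)^{**}) \leq \TDD(\mathcal H) < \infty$, and invoke Theorem~\ref{thm:dalmau}. The ``wrinkle'' you spend time on is harmless, since Theorem~\ref{thm:dalmau} as restated in the paper is quantified over an \emph{arbitrary} constraint \languageword\ of global constraints (which are polynomial-time evaluable by definition), so it applies directly to the \languageword\ formed by the constraints of $P'$, exactly as the paper's own two-line proof assumes.
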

\begin{proof}
  Let $\languagesymbol$ be a cooperating \languageword, $\mathcal H$ a class of
  hypergraphs such that $\TDD(\mathcal H) < \infty$, and $P \in
  \CSP(\mathcal H, \languagesymbol)$. 
  Reduce $P$ to a CSP instance $P'$ using Theorem~\ref{thm:dd-csp}.
  By Definition~\ref{def:tdd}, since $\hyp(P') = \hyp(P)^{**}$,
  $\tw(\hyp(P')) < \infty$, which means that $P'$ satisfies the
  conditions of Theorem~\ref{thm:dalmau}, and hence can be solved in
  polynomial time.
\end{proof}

Recall the family of constraint problems described in
Example~\ref{example:Running} at the start of this paper.  Since the
constraints in this problem form a cooperating \languageword\ 
(Example~\ref{example:Runningcooperates}), and all instances have
bounded twDD (Example~\ref{example:RunningtwDD}), this family of
problems is tractable by Theorem~\ref{thm:tractable-hyp-class}.

\omitted{

\section{Relational structures and cores}

We can obtain a slightly more general tractable class by building on a more
general version of Theorem~\ref{thm:dalmau}.

\begin{definition}[Relational structure]
  A \emph{relational structure} $\mathbf{S} = \tup{V, R_1,\ldots,R_m}$ is a set $V$, 
  called the \emph{universe} of $S$, 
  together with a list of relations $R_1,\ldots,R_m$ over $V$.
\end{definition}

  


The underlying hypergraph of a relational structure $\mathbf{S} = \tup{V, R_1,\ldots,R_m}$, 
denoted $\hyp(\mathbf{S})$, is the hypergraph with vertex set $V$ and
a hyperedge corresponding to each tuple of each of the relations in $\mathbf{S}$.
Each of these hyperedges consists of the set of distinct elements in the corresponding tuple.
For a class $\mathcal A$ of relational structures, we define
$\hyp(\mathcal A) = \{\hyp(\mathbf{S}) \mid \mathbf{S} \in \mathcal A\}$.

\begin{definition}
  A CSP instance $\tup{V, C}$ \emph{permits} a relational structure
  $\tup{W,R_1,\ldots,R_m}$ if $W = V$, and there is some linear
  ordering of $V$ and some partition of the constraints $C$ into sets
  $C_1,\ldots,C_m$, such that for each $i$, every constraint in $C_i$
  has the same extension, and the tuples of $R_i$ correspond to the
  sets $\vars(\delta)$, for each $e[\delta] \in C_i$, ordered by the
  ordering of $V$.
\end{definition}

\begin{lemma}
  \label{lem:hyp-struct}
  For every relational structure $\mathbf{S}$ and CSP instance $P$, 
  if $P$ permits $\mathbf{S}$ then $\hyp(\mathbf{S}) = \hyp(P)$.
\end{lemma}

Note that the reverse does not hold in general: a relational structure
may have the same underlying hypergraph as a CSP instance, but not be permitted
by it due to the extensions of the constraints.

We can get a bigger tractable CSP class by combining
Theorem~\ref{thm:tractable-hyp-class} with a known result about relational
structures.

\begin{definition}[Core]
  A relational structure $\tup{V, R_1,\ldots,R_m}$ is a
  \emph{substructure} of another relational structure $\tup{V',
    R_1',\ldots,R_m'}$ if $V \subseteq V'$, 
    and for every $i \in \{1,\ldots,m\}$, $R_i \subseteq R_i'$.

  A \emph{homomorphism} from a relational structure $\tup{V,
    R_1,\ldots,R_m}$ to a relational structure $\tup{V',
    R_1',\ldots,R_m'}$ is a function $h : V \rightarrow V'$ such that,
  for every $i \in \{1,\ldots,m\}$ and $\mathbf{t} \in R_i$, 
  we have $h(\mathbf{t}) \in R_i'$.

  A relational structure $\mathbf{S}$ is a \emph{core} if there is no
  homomorphism from $\mathbf{S}$ to a proper substructure of $\mathbf{S}$. A core of a
  relational structure $\mathbf{S}$ is a substructure $\mathbf{S}'$ of $\mathbf{S}$ 
  such that
  there is a homomorphism from $\mathbf{S}$ to $\mathbf{S}'$, and $\mathbf{S}'$ is a core. As all
  cores of a relational structure $\mathbf{S}$ are isomorphic, we will speak of
  \emph{the core} of $\mathbf{S}$, denoted $\Core(\mathbf{S})$. For a class of
  relational structures $\mathcal A$, we define $\Core(\mathcal A) =
  \{ \Core(\mathbf{S}) \mid \mathbf{S} \in \mathcal A \}$.
\end{definition}

\begin{example}
The core of a relational structure $\mathbf{S}$ may have a much simpler hypergraph
than $\mathbf{S}$ itself. For example, the core of any bipartite graph is a single edge.
\end{example}

\begin{theorem}[\protect{\cite[Prop.~1]{Dalmau02:csp-tractability-cores}}]
  \label{thm:dalmau-prop}
  Any CSP instance $P$ over a constraint \languageword\
  $\languagesymbol$ that permits a relational structure $\mathbf{S}$
  can be converted in polynomial-time to a CSP instance $P'$ over
  $\languagesymbol$ that permits the structure $\Core(\mathbf{S})$ and
  has a solution if and only if $P$ does.
\end{theorem}

In the original, Theorem~\ref{thm:dalmau-prop} is stated in terms of a
constraint language, i.e.~a set of relations, rather than a constraint
\languageword. However, since the core of a relational structure is a
substructure of it, this result also holds for an arbitrary constraint \languageword.


\begin{theorem}
  Let $\languagesymbol$ be a constraint \languageword\ and $\mathcal
  A$ a class of relational structures. $\CSP(\hyp(\mathcal
  A),\languagesymbol)$ is tractable if we have that $\languagesymbol$
  is a cooperating \languageword{} and that $\TDD(\hyp(\Core(\mathcal
  A))) < \infty$.
\end{theorem}
\begin{proof}
  Let $\languagesymbol$ be a cooperating \languageword, $\mathcal A$ a
  class of relational structures such that $\TDD(\hyp(\Core(\mathcal
  A))) < \infty$, and $P \in \CSP(\hyp(\mathcal A),
  \languagesymbol)$. By Theorem~\ref{thm:dalmau-prop}, there exists a
  CSP $P'$ over $\languagesymbol$ that permits a structure in
  $\Core(\mathcal A)$, and which has a solution if and only if $P$
  does.

  Since $P'$ permits a relational structure in $\Core(\mathcal A)$, we
  have that $\hyp(P') \in \hyp(\Core(\mathcal A))$, and therefore that
  $P' \in \CSP(\hyp(\Core(\mathcal A)), \languagesymbol)$. Since
  $\languagesymbol$ is a cooperating \languageword\ and
  $\TDD(\hyp(\Core(\mathcal A))) < \infty$, $P'$ satisfies the
  conditions of Theorem~\ref{thm:tractable-hyp-class}, and hence can
  be solved in polynomial time.
\end{proof}

}

\section{Summary and Future Work}

We have identified a novel tractable class of constraint problems with
global constraints.  In fact, our results generalize several
previously studied classes of
problems~\cite{Dalmau02:csp-tractability-cores}.  Moreover, this is
the first representation-independent tractability result for
constraint problems with global constraints.

Our new class is defined by restricting both the nature of the
constraints and the way that they interact. As demonstrated in
Example~\ref{example:singleEGC}, instances with a single global
constraint may already be \NP-complete \cite{quimper04-gcc-npc}, so we
cannot hope to achieve tractability by structural restrictions alone.
In other words, notions such as bounded degree of
cyclicity~\cite{GyssensJeavons94-database-tech} or bounded hypertree
width~\cite{Gottlob02:jcss-hypertree} are not sufficient to ensure
tractability in the framework of global constraints, where the arity
of individual constraints is unbounded.  This led us to introduce the
notion of a cooperating constraint \languageword, which is
sufficiently restricted to ensure that an individual constraint is
always tractable.

However, this restriction on the nature of the constraints is still not 
enough to ensure tractability on any structure:
Example~\ref{example:3col} demonstrates 
that not all structures are tractable even with a cooperating constraint \languageword.
In fact, a family of problems with acyclic structure (hypertree width one)
over a cooperating constraint \languageword\ can still be NP-complete.
This led us to investigate restrictions on the structure that are sufficient
to ensure tractability for all instances over a cooperating \languageword.
In particular, we have shown that it is sufficient to ensure that 
the dual of the dual of the hypergraph of the instance has bounded treewidth.

An intriguing open question is whether there are other restrictions
on the nature of the constraints or the structure of the instances that 
are sufficient to ensure tractability in the framework of global
constraints. Very little work has been done on this question, apart from the 
pioneering work of Bulatov and Marx~\cite{Bulatov10:lmcs-complexity}, 
which considered only a single global cardinality constraint, along with arbitrary 
table constraints, and of Chen and Dalmau~\cite{ChenGrohe10-csp-succ-repr}
on two specific succinct representations. 
Almost all other previous work on tractable classes has
considered only table constraints. This may be one reason why such work has
had little practical impact on  the design of constraint solvers, which rely
heavily on the use of in-built special-purpose global constraints.

We see this paper as a first step in the development of a more
robust and applicable theory of tractability for global constraints.



\begin{thebibliography}{10}
\providecommand{\url}[1]{\texttt{#1}}
\providecommand{\urlprefix}{URL }

\bibitem{pup-cpaior2011} Aschinger, M., Drescher, C., Friedrich, G.,
  Gottlob, G., Jeavons, P., Ryabokon, A., Thorstensen, E.:
  Optimization methods for the partner units problem. In:
  Proc.~CPAIOR'11. LNCS, vol. 6697, pp. 4--19. Springer (2011)

\bibitem{struct-decomp-stacs-et} Aschinger, M., Drescher, C., Gottlob,
  G., Jeavons, P., Thorstensen, E.: Structural decomposition methods
  and what they are good for. In: Schwentick, T., D{\"u}rr, C. (eds.)
  Proc.~STACS'11. LIPIcs, vol.~9, pp. 12--28 (2011)

\bibitem{Beeri83-acyclic-databases} Beeri, C., Fagin, R., Maier, D.,
  Yannakakis, M.: On the desirability of acyclic database
  schemes. Journal of the {ACM} 30, 479--513 (July 1983)

\bibitem{Beldiceanu:2001} Beldiceanu, N.: Pruning for the minimum
  constraint family and for the number of distinct values constraint
  family. In: Proc.~CP'01. LNCS, vol.~2239, pp. 211--224. Springer
  (2001)

\bibitem{Bess07:reasoning-global-const} Bessiere, C., Hebrard, E.,
  Hnich, B., Walsh, T.: The complexity of reasoning with global
  constraints. Constraints 12(2), 239--259 (2007)

\bibitem{bess-nvalue} Bessiere, C., Katsirelos, G., Narodytska, N.,
  Quimper, C.G., Walsh, T.: Decomposition of the {NValue}
  constraint. In: Proc.~CP'10. LNCS, vol. 6308, pp. 114--128. Springer
  (2010)

\bibitem{bulatov05:classifying-constraints} Bulatov, A., Jeavons, P.,
  Krokhin, A.: Classifying the complexity of constraints using finite
  algebras. {SIAM} Journal on Computing 34(3), 720--742 (2005)

\bibitem{Bulatov10:lmcs-complexity} Bulatov, A.A., Marx, D.: The
  complexity of global cardinality constraints. Logical Methods in
  Computer Science 6(4:4), 1--27 (2010)

\bibitem{ChenGrohe10-csp-succ-repr} Chen, H., Grohe, M.: Constraint
  satisfaction with succinctly specified relations. Journal of
  Computer and System Sciences 76(8), 847--860 (2010)

\bibitem{gdnf-cohen-repr} Cohen, D.A., Green, M.J., Houghton, C.:
  Constraint representations and structural tractability. In:
  Proc.~CP'09. LNCS, vol. 5732, pp. 289--303. Springer (2009)

\bibitem{Cohen08:unifiedstructural} Cohen, D.A., Jeavons, P., Gyssens,
  M.: A unified theory of structural tractability for constraint
  satisfaction problems. Journal of Computer and System Sciences
  74(5), 721--743 (2008)

\bibitem{Dalmau02:csp-tractability-cores} Dalmau, V., Kolaitis, P.G.,
  Vardi, M.Y.: Constraint satisfaction, bounded treewidth, and
  finite-variable logics. In: Proc.~CP'02. LNCS, vol. 2470,
  pp. 223--254. Springer (2002)

\bibitem{DechterPearl89:treeclustering} Dechter, R., Pearl, J.: Tree
  clustering for constraint networks. Artificial Intelligence 38(3),
  353--366 (1989)

\bibitem{Freuder90:ktree} Freuder, E.C.: Complexity of k-tree
  structured constraint satisfaction problems. In: Proc.~AAAI,
  pp. 4--9. AAAI Press / The MIT Press (1990)

\bibitem{Garey79:intractability} Garey, M.R., Johnson, D.S.: Computers
  and {I}ntractability: {A} {G}uide to the {T}heory of
  {N{P}}-{C}ompleteness. W. H. Freeman (1979)

\bibitem{Gent06:minion} Gent, I.P., Jefferson, C., Miguel, I.:
  {MINION}: A fast, scalable constraint solver. In: Proc.~ECAI'06,
  pp. 98--102. IOS Press (2006)

\bibitem{Gottlob00:acomparison} Gottlob, G., Leone, N., Scarcello, F.:
  A comparison of structural {CSP} decomposition methods. Artificial
  Intelligence 124(2), 243--282 (2000)

\bibitem{Gottlob02:jcss-hypertree} Gottlob, G., Leone, N., Scarcello,
  F.: Hypertree decompositions and tractable queries. Journal of
  Computer and System Sciences 64(3), 579--627 (2002)

\bibitem{Green08:cp-structural} Green, M.J., Jefferson, C.: Structural
  tractability of propagated constraints.  In: Proc.~{C}{P}'08. LNCS,
  vol. 5202, pp. 372--386. Springer (2008)

\bibitem{grohe07-hom-csp-complexity} Grohe, M.: The complexity of
  homomorphism and constraint satisfaction problems seen from the
  other side. Journal of the ACM 54(1), 1--24 (2007)

\bibitem{GyssensJeavons94-database-tech} Gyssens, M., Jeavons, P.G.,
  Cohen, D.A.: Decomposing constraint satisfaction problems using
  database techniques. Artificial Intelligence 66(1), 57--89 (1994)

\bibitem{bin-repack-datacenters} Hermenier, F., Demassey, S., Lorca,
  X.: Bin repacking scheduling in virtualized datacenters. In:
  Proc.~CP'11. LNCS, vol. 6876, pp. 27--41. Springer (2011)

\bibitem{vanHoeve2006:global-const} van Hoeve, W.J., Katriel, I.:
  Global constraints. In: Rossi, F., van Beek, P., Walsh, T. (eds.)
  Handbook of Constraint Programming, Foundations of Artificial
  Intelligence, vol.~2, chap.~6, pp. 169--208. Elsevier (2006)

\bibitem{Kutz08:sim-match} Kutz, M., Elbassioni, K., Katriel, I.,
  Mahajan, M.: Simultaneous matchings: Hardness and
  approximation. Journal of Computer and System Sciences 74(5),
  884--897 (August 2008)

\bibitem{stoc-Marx10-submod-width} Marx, D.: Tractable hypergraph
  properties for constraint satisfaction and conjunctive queries. In:
  Proc.~STOC'10, pp. 735--744. ACM (2010)

\bibitem{quimper04-gcc-npc} Quimper, C.G., L\'opez-Ortiz, A., van
  Beek, P., Golynski, A.: Improved algorithms for the global
  cardinality constraint. In: Proc.~CP'04. LNCS, vol. 3258,
  pp. 542--556. Springer (2004)

\bibitem{handbook-discrete} Rosen, K.H., Michaels, J.G., Gross, J.L.,
  Grossman, J.W., Shier, D.R. (eds.): Handbook of Discrete and
  Combinatorial Mathematics. Discrete Mathematics and Its
  Applications, CRC Press (2000)

\bibitem{Rossi06:handbook} Rossi, F., van Beek, P., Walsh, T. (eds.):
  The {H}andbook of {C}onstraint {P}rogramming. Elsevier (2006)

\bibitem{Samer11:constraints-tractable} Samer, M., Szeider, S.:
  Tractable cases of the extended global cardinality
  constraint. Constraints 16(1), 1--24 (2011)

\bibitem{Wallace96practicalapplications} Wallace, M.: Practical
  applications of constraint programming. Constraints 1, 139--168
  (September 1996)

\bibitem{wallace97-eclipse} Wallace, M., Novello, S., Schimpf, J.:
  {ECLiPSe}: A platform for constraint logic programming. ICL Systems
  Journal 12(1), 137--158 (May 1997)

\end{thebibliography}

\end{document}